\DeclareMathOperator*{\argmin}{arg\,min}
\numberwithin{equation}{section}
\begin{document}
\title{A Formalization of The Natural Gradient Method for General Similarity Measures}
\titlerunning{Formal Natural Gradient}
%
\author{Anton Mallasto \and Tom Dela Haije \and
Aasa Feragen}
\authorrunning{A. Mallasto et al.}
%
\institute{University of Copenhagen, Copenhagen, Denmark\\
\email{\{mallasto, haije, aasa\}@di.ku.dk}}
\maketitle              
\begin{abstract}
In optimization, the natural gradient method is well-known for likelihood maximization. The method uses the Kullback-Leibler divergence, corresponding infinitesimally to the Fisher-Rao metric, which is pulled back to the parameter space of a family of probability distributions. This way, gradients with respect to the parameters respect the Fisher-Rao geometry of the space of distributions, which might differ vastly from the standard Euclidean geometry of the parameter space, often leading to faster convergence. However, when minimizing an arbitrary similarity measure between distributions, it is generally unclear which metric to use.
We provide a general framework that, given a similarity measure, derives a metric for the natural gradient. We then discuss connections between the natural gradient method and multiple other optimization techniques in the literature. Finally, we provide computations of the formal natural gradient to show overlap with well-known cases and to compute natural gradients in novel frameworks.

\keywords{Optimization  \and Natural Gradient \and Statistical Manifolds.}
\end{abstract}
\section{Introduction}
\emph{The natural gradient method} \cite{amari98} in optimization originates from \emph{information geometry}\cite{amari16}, which utilizes the Riemannian geometry of statistical manifolds (the parameter spaces of model families) endowed with the \emph{Fisher-Rao metric}. The natural gradient is used for minimizing the \emph{Kullback-Leibler} (KL) divergence, a \emph{similarity measure} between a model distribution and a target distribution,  that can be shown to be equivalent to maximizing model likelihood of given data. The success of natural gradient in optimization stems from accelerating likelihood maximization and providing infinitesimal invariance to reparametrizations of the model, providing robustness towards arbitrary parametrization choices.

In the modern formulation of the natural gradient, a \emph{Riemannian metric} on the statistical manifold is chosen, with respect to which the gradient of the given similarity is computed \cite[Sec. 12]{amari16}. However, it is generally unclear how to choose the Riemannian metric for a given similarity. One approach is pulling back our favorite metric from the space of distributions (e.g. the Fisher-Rao metric) to the statistical manifold, with no guarantees of the metric relating to the similarity that is being minimized. For example, see Fig.~\ref{toy_example}, where model selection for Gaussian process regression is carried out by maximizing the prior-likelihood of the data with natural gradients stemming from different metrics. Clearly, the Fisher-Rao metric---which infinitesimally corresponds to KL-divergence---achieves the fastest convergence. Motivated by this, we show how a natural Riemannian metric can be derived by locally approximating the Hessian matrix of the cost. We name the resulting approach the \emph{formal natural gradient} method.

Sometimes, one can compute the Hessian (Newton's method) or its \emph{absolute value} \cite{dauphin14} to derive such a metric. However, in many cases the Hessian can only be computed \emph{locally}, which is employed by the formal natural gradient.

\begin{figure}
\centering
\includegraphics[width=0.9\textwidth]{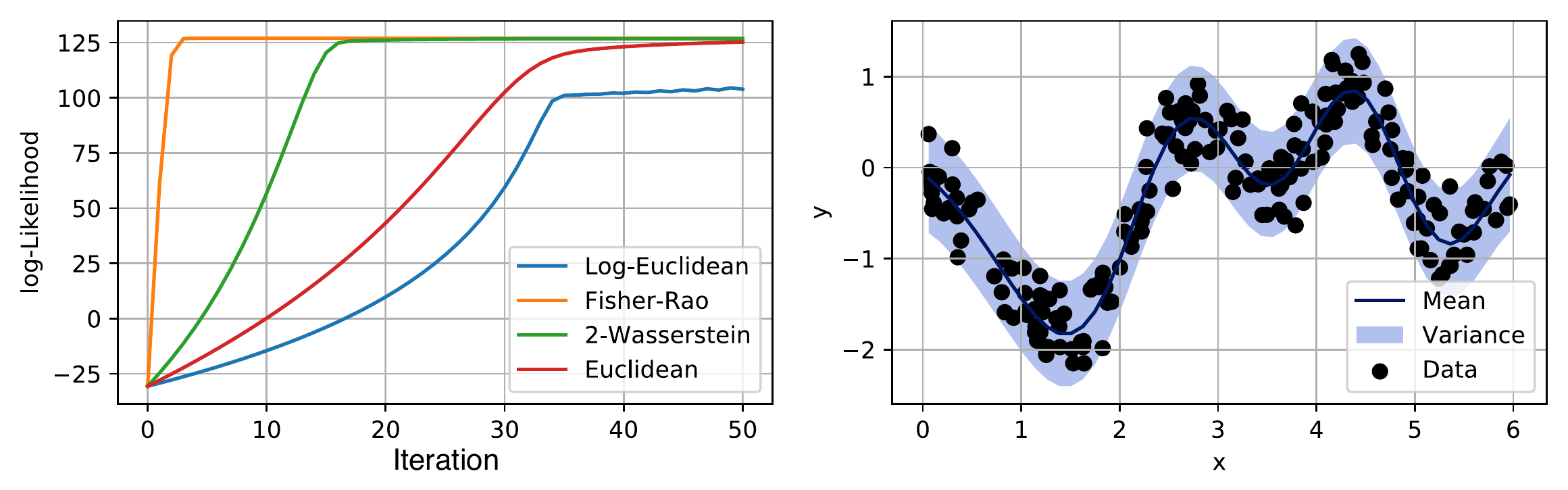}
\vspace{-0.15in}
\caption{Maximizing prior likelihood for Gaussian process regression using natural gradients under different metrics on Gaussian distributions. Convergence plots on left. Data and model fit, with optimal exponentiated quadratic kernel parameters, on right. }
\label{toy_example}
\end{figure}

\vspace{-0.4in}
\section{Useful Metrics via Formalizing the Natural Gradient}
\label{sec:formal_natural_gradient}
The natural gradient is computed with respect to a chosen metric on the statistical manifold, which often results from pulling back a metric between distributions. This way, the gradient takes into account how the metric on distributions penalizes movement into different directions. We will now review how the natural gradient is computed given a Riemannian metric. Then, we introduce the formal natural gradient, which derives the metric from the similarity measure.

\vspace{10pt}\hspace{-15pt}\textbf{Statistical manifold.} Let $\mathrm{AC}(X)$ denote the set of absolutely continuous probability densities on some manifold $X$. A \emph{statistical manifold} is defined by a triple $(X, \Theta, \rho)$, where $X$ is called the \emph{sample space} and $\Theta\subseteq \mathbb{R}^n$ the \emph{parameter space}. Then,  $\rho: \Theta \to \mathrm{AC}(X)$ is a $C^2$ map, given by $\rho: \theta \mapsto \rho_\theta(\cdot)$, for any $\theta \in \Theta$. Abusing terminology, we also call $\Theta$ the statistical manifold.

\vspace{10pt}\hspace{-15pt}\textbf{Cost function.} Let a \emph{similarity measure} $c^* \colon\mathrm{AC}(X) \times \mathrm{AC}(X) \to \mathbb{R}_{\geq 0}$  (e.g. a metric or an information divergence) be defined on $\mathrm{AC}(X)$ satisfying $c^*(\rho, \rho')=0$ if and only if $\rho=\rho'$. Assume $c^*$ is $C^2$ when $\rho\neq \rho'$ and strictly convex in $\rho$. Given a target distribution $\rho\in \mathrm{AC}(X)$ and a statistical manifold $(X, \Theta, \rho)$, we wish to minimize the \emph{cost function} $c:\Theta \times \mathrm{AC}(X) \to \mathbb{R}_{\geq 0}$ given by
\begin{equation}
c(\theta, \rho) = c^*(\rho_\theta, \rho).
\end{equation}
If $\rho = \rho_{\theta'}$ for some $\theta' \in \Theta$, then by abuse of notation we write $c(\theta, \theta')$.

\vspace{10pt}\hspace{-15pt}\textbf{Natural gradient.} Assume a Riemannian structure $(\Theta, g^\Theta)$ on the statistical manifold. The \emph{Riemannian metric} $g^\Theta$ induces a \emph{metric tensor} $G^\Theta$, given by $g_\theta^\Theta(u,v) = u^TG_\theta^\Theta v$ and a \emph{distance function} which we denote by $d_\Theta$. The vectors $u,v$  belong to the \emph{tangent space} $T_\theta \Theta$ at $\theta$. It is common intuition that the negative gradient $v = -\nabla_\theta c(\theta, \rho)$ gives the direction of maximal descent for $c$. However, this is only true on a Euclidean manifold. Consider 
\begin{equation}
\hat{v} = \argmin\limits_{v \in T_\theta \Theta:d_\Theta(\theta, \theta + v)= \Delta}  c(\theta + v, \rho),
\end{equation}
Linearly approximating the objective and quadratically approximating the constraint, this is solved using Lagrangian multipliers, giving the \emph{natural gradient}
\begin{equation}
\hat{v} = - \frac{1}{\lambda}\left[G_\theta^\Theta\right]^{-1}\nabla_\theta c(\theta, \rho),
\end{equation}
for some Lagrangrian multiplier $\lambda > 0$, which we refer to as the \emph{learning rate}.

\vspace{10pt}\hspace{-15pt}\textbf{Formal natural gradient.}  Traditionally, the natural gradient uses the Fisher-Rao metric when the similarity measure used is the KL-divergence. However, the authors are unaware of a general formal framework guiding the choice of the Riemannian metric given any similarity measure. To this end, we introduce the \emph{formal natural gradient}. Thus, consider the minimization task
\begin{equation}
\hat{v}:= \argmin_{v\in T_\theta \Theta,~c(\theta + v, \theta) = \Delta} c(\theta + v, \rho).
\label{def:naturalgradient}
\end{equation}
We approximate the constraint by the second degree Taylor expansion
\begin{equation}
c(\theta + v, \theta) \approx \frac{1}{2}v^T \left(\nabla^2_{\eta \to \theta}c(\eta, \theta)\right)v,
\label{eq:reg_appr}
\end{equation}
where the 0\textsuperscript{th} and 1\textsuperscript{st} degree terms disappear as $c(\theta + v,\theta)$ has a minimum 0 at $v=0$. We call the symmetric positive definite matrix $H^c_\theta :=\nabla^2_{\eta \to \theta}c(\eta, \theta)$ the \emph{local Hessian}. Then, we further approximate the objective function
\begin{equation}
c(\theta + v, \rho) \approx c(\theta, \rho) + \nabla_{\theta} c(\theta, \rho)^Tv,
\end{equation}
and thus by the method of Lagrangian multipliers, \eqref{def:naturalgradient} is solved as
\begin{equation}
\hat{v} = -\frac{1}{\lambda}\left[H^c_\theta\right]^{-1}\nabla_{\theta}c(\theta, \rho),
\label{eq:nat_grad}
\end{equation}
 We refer to $\hat{v}$ as the \emph{formal natural gradient} with respect to $c$.

\begin{remark}
We could have just substituted $\eta = \theta$ in the local Hessian if $\nabla^2_\eta c(\eta, \theta)$ was continuous at $\eta$. However, when studying Finsler metrics later in this work, the expression has a discontinuity at $\eta = \theta$. Therefore, we compute the limit from the direction of the gradient.
\end{remark}

\vspace{10pt}\hspace{-15pt}\textbf{Metric interpretation.} Fixing $\rho$ in \eqref{def:naturalgradient}, the local Hessian $G^c_\theta$ can be seen as a metric tensor at any $\theta\in \Theta$, inducing an inner product $g_\theta^c \colon T_\theta \Theta\times T_\theta \Theta \rightarrow \mathbb{R}$ given by $g_\theta^c(v,u) = v^TH^c_\theta u$. This imposes a \emph{pseudo-Riemannian} structure on $\Theta$, forming the pseudo-Riemannian manifold $(\Theta, g^c)$. Therefore, $G^c_x$ provides us a natural metric under which to compute the natural gradient for a general $c^*$. If $\rho$ has a full rank Jacobian everywhere, then a Riemannian metric is retrieved. Also, there is an obvious \emph{pullback} structure at play. Recall, that the cost is defined by $c(\theta, \theta')=c^*(\rho_{\theta}, \rho_{\theta'})$. Then, computing the local Hessian yields
\begin{equation}
H^c_\theta = J_\theta^T H^{c^*}_{\rho_\theta} J_\theta,
\end{equation}
where $H^{c^*}_{\rho_\theta} = \nabla^2_{\rho \to \rho_\theta} c^*(\rho, \rho_\theta)$. Thus, $H^c$ results from pulling back the $c^*$ induced metric tensor $H^{c^*}$ on $\mathrm{AC}(X)$ to the statistical manifold $\Theta$.

\vspace{10pt}\hspace{-15pt}\textbf{Asymptotically Newton's method.}
We provide a straightforward result, stating that the local Hessian approaches the actual Hessian in the limit, thus the formal natural gradient method approaches Newton's method.
\begin{proposition}
Assume $c(\theta,\rho) = c(\theta, \theta')$ for some $\theta' \in \Theta$, and that $c$ is $C^2$ in $\theta$. Then, the natural gradient yields asymptotically Newton's method.
\end{proposition}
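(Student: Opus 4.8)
The plan is to reduce the claim to the convergence of a single matrix. Write the realizable target as $\rho = \rho_{\theta'}$, so the objective is the one-argument map $\theta \mapsto c(\theta, \theta')$. Newton's method updates along $-[\nabla_\theta^2 c(\theta, \theta')]^{-1}\nabla_\theta c(\theta, \theta')$, while the formal natural gradient \eqref{eq:nat_grad} updates along $-\tfrac{1}{\lambda}[H^c_\theta]^{-1}\nabla_\theta c(\theta, \theta')$. Since both directions use the \emph{same} gradient $\nabla_\theta c(\theta, \theta')$, and choosing the learning rate $\lambda = 1$ matches the scalar factors, it suffices to show that the local Hessian $H^c_\theta$ agrees asymptotically with the true Hessian $\nabla_\theta^2 c(\theta, \theta')$ of the objective as $\theta$ approaches the minimizer $\theta'$.

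To compare them cleanly I would introduce the matrix field $F(\eta, \zeta) := \nabla_\eta^2 c(\eta, \zeta)$, the Hessian taken in the first slot. The true Hessian of the objective is then $F(\theta, \theta')$, with the second slot frozen at the target, whereas the local Hessian is $H^c_\theta = \lim_{\eta \to \theta} F(\eta, \theta) = F(\theta, \theta)$, the diagonal evaluation (the limit collapsing to the value since $c$ is assumed $C^2$ in $\theta$). The key observation is that at the minimizer these two matrices are \emph{literally the same expression}, namely $F(\theta', \theta')$. Invoking joint continuity of the second-order partials supplied by the $C^2$ hypothesis, both $F(\theta, \theta') \to F(\theta', \theta')$ and $F(\theta, \theta) \to F(\theta', \theta')$ as $\theta \to \theta'$, so their difference $H^c_\theta - \nabla_\theta^2 c(\theta, \theta')$ vanishes in the limit.

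Finally I would promote this matrix convergence to convergence of the search directions. Because the local Hessian is symmetric positive definite and the limiting Hessian $F(\theta', \theta')$ is therefore invertible, matrix inversion is continuous at $F(\theta', \theta')$, so $[H^c_\theta]^{-1} \to [F(\theta', \theta')]^{-1}$ and $[\nabla_\theta^2 c(\theta, \theta')]^{-1} \to [F(\theta', \theta')]^{-1}$; multiplying by the common gradient then shows the two updates coincide asymptotically, i.e. the formal natural gradient becomes Newton's method near $\theta'$.

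The main obstacle is purely a regularity one, and it sits exactly at the point of interest. The standing assumptions only grant that $c^*$ is $C^2$ on the off-diagonal $\rho \neq \rho'$, yet the minimizer forces $\rho_\theta = \rho_{\theta'}$, which is on the diagonal where $c^*$ need not be smooth and where, as the remark notes, the Finsler examples genuinely fail to be $C^2$. The extra hypothesis that $c$ is $C^2$ in $\theta$ is precisely what rescues the continuity of $F$ at $(\theta', \theta')$ and lets the diagonal limit $\lim_{\eta \to \theta'} F(\eta, \theta')$ be taken without ambiguity of direction. I would therefore be careful to use this hypothesis to justify the joint continuity of $F$ near $(\theta', \theta')$ and the positive-definiteness, hence invertibility, of the limiting Hessian, as these are the only nontrivial ingredients; everything else is continuity of inversion and bookkeeping of the shared gradient and learning rate.
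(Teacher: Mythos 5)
Your proposal is correct and follows essentially the same route as the paper's proof: both arguments hinge on the observation that the local Hessian $\nabla^2_{\eta\to\theta}c(\eta,\theta)$ and the true Hessian $\nabla^2_\theta c(\theta,\theta')$ converge to the common matrix $\nabla^2_{\eta=\theta'}c(\eta,\theta')$ as $\theta\to\theta'$, so the two update directions coincide in the limit. Your version is somewhat more careful than the paper's three-line argument --- explicitly tracking the two evaluations $F(\theta,\theta)$ versus $F(\theta,\theta')$, invoking continuity of matrix inversion, and flagging that the stated hypothesis ($C^2$ in the first argument only) is really being used to get joint continuity of the second-order partials near the diagonal --- but the underlying idea is identical.
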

\begin{proof}
The Hessian at $\theta$ is given by $\nabla^2_\theta c(\theta, \theta')$. Then, as $c$ is $C^2$ in the first argument, passing the limit $\theta\to \theta'$ yields
\begin{equation}
H^c_\theta = \nabla^2_{\eta \to \theta} c(\eta, \theta) \overset{\theta\to \theta'}{\to} \nabla^2_{\eta \to \theta'}c(\eta,\theta') = \nabla^2_{\eta = \theta'} c(\eta, \theta'),
\end{equation}
where the last expression is the Hessian at $\theta'$.
\end{proof}

\section{Loved Child Has Many Names -- Related Methods}
In this section, we discuss connections between seemingly different optimization methods. Some of these connections have already been reported in the literature, some are likely to be known to some extent in the community. However, the authors are unaware of previous work drawing out these connections in their full extent. We provide such a discussion, and then present other related connections.

As discussed in \cite{parikh14}, \emph{proximal methods} and \emph{trust region methods} are equivalent up to learning rate. Trust region methods employ an $l^2$-metric constraint
\begin{equation}
x_{t+1} = \argmin\limits_{x:\|x-x_t\|_2 \leq \Delta} f(x),~\Delta>0,
\label{trustregion}
\end{equation}
where as proximal methods include a $l^2$-metric penalization term
\begin{equation}
x_{t+1} =  \argmin\limits_{x}\left\lbrace f(x) + \frac{1}{2\lambda}\|x - x_t\|_2^2\right\rbrace,~\lambda > 0,
\end{equation}
The two can be shown to be equivalent up to learning rate via Lagrangian duality.

Instead of the $l^2$ metric penalization, \emph{mirror gradient descent}\cite{nemirovsky83} employs a more general \emph{proximity function} $\Psi:\mathbb{R}^n \times \mathbb{R}^n \to \mathbb{R}_{>0}$, that is strictly convex in the first argument. Then, the mirror descent step is given by
\begin{equation}
x_{t+1} = \argmin\limits_{x}\left\lbrace \langle x - x_t, \nabla f(x_t)\rangle + \frac{1}{\lambda}\Psi(x, x_t)\right\rbrace.
\end{equation}
Commonly, $\Psi$ is chosen to be a \emph{Bregman divergence} $D_g$, defined by choosing a strictly convex $C^2$ function $g$ and writing
\begin{equation}
D_g(x,x') = g(x) - g(x') - \langle \nabla g(x'), x - x' \rangle.
\end{equation}
To explain how these methods are related to the natural gradient, assume that we are minimizing a general similarity measure $c(x,y)$ with respect to $x$, as in Sec. \ref{sec:formal_natural_gradient}. Recall, that we first defined the natural gradient as a \emph{trust region step}. In order to derive an analytical expression for the iteration, we approximated the objective function with the first order Taylor polynomial and the constraints by the local Hessian and then used Lagrangian duality to yield a \emph{proximal expression}, which yields the formal natural gradient when solved. In Sec. \ref{sec:computations}, we will show how this workflow indeed corresponds to known examples of the natural gradient.

\vspace{10pt}\hspace{-15pt}\textbf{Further connections.} Raskutti and Mukherjee~\cite{raskutti15} showed, that Bregman divergence proximal mirror gradient descent is equivalent to the natural gradient method on the \emph{dual manifold} of the Bregman divergence. Khan et al.~\cite{khan15}, consider a KL divergence proximal algorithm for learning \emph{conditionally conjugate exponential families}, which they show to correspond to a natural gradient step. For exponential families, the KL divergence corresponds to a Bregman divergence, and so the natural gradient step is on the \emph{primal manifold} of the Bregman divergence. Thus the result seems to conflict with the resut in \cite{raskutti15}. However, this can be explained, as the gradient is taken with respect to a different argument of the divergence, i.e., they consider $\nabla_x D_g(x',x)$ and not $\nabla_x D_g(x,x')$. It is intriguing how two different geometries are involved in this choice.

Pascanu and Bengio \cite{pascanu13} remarked on the connections between the natural gradient method and Hessian-free optimization \cite{martens10}, Krylov Subspace Descent \cite{saad81}, and TONGA \cite{roux08}.  The main connection between Hessian-free optimization and Krylov subspace descent is the use of \emph{extended Gauss-Newton approximation of the Hessian} \cite{schraudolph02}, which gives a similar square form involving the Jacobian as the \emph{pullback} Fisher-Rao metric on a statistical manifold. The connection was further studied by Martens \cite{martens14}, where an equivalence criterion between the Fisher-Rao natural gradient and extended Gauss-Newton was given.

\section{Example Computations}
\label{sec:computations}
We now show that in the KL divergence case and the case of a Riemannian metric, the definition of the formal natural gradient matches the classical definition. We contribute local Hessians for formal natural gradients under general $f$-divergences and Finsler metrics, including the $p$-Wasserstein metrics.

\vspace{10pt}\hspace{-15pt}\textbf{Natural gradient of f-divergences.} Let $\rho, \rho' \in\mathrm{AC}(X)$ and $f:\mathbb{R}_{>0}\rightarrow \mathbb{R}_{\geq 0}$ be a convex function satisfying $f(1) = 0$. Then, the $f$-divergence from $\rho'$ to $\rho$ is
\begin{equation}
D_f(\rho ||\rho') = \int_X \rho(x) f\left(\frac{\rho'(x)}{\rho(x)}\right)dx.
\label{def:f_div}
\end{equation}
Now, consider the statistical manifold $(\mathbb{R}^d, \Theta, \rho)$, and compute the local Hessian
\begin{equation}
\left[H^{D_f}_\theta\right]_{ij} = \nabla^2f(1)\int_{X}\frac{\partial \log \rho_\theta(x)}{\partial \theta_i}\frac{\partial \log \rho_\theta(x)}{\partial \theta_j} \rho_\theta(x) dx.
\label{eq:f_div_lhessian}
\end{equation}
Substituting $f = - \log$ in \eqref{def:f_div} results in the KL-divergence, denoted by $D_{\mathrm{KL}}(\rho ||\rho')$. Noticing that $\nabla^2f(1)=1$ with this substitution, we can write \eqref{eq:f_div_lhessian} as $H^{D_f}_\theta = \nabla^2 f(1)H^{D_{\mathrm{KL}}}_\theta$, where the local Hessian $H^{D_{\mathrm{KL}}}_\theta$ is also the Fisher-Rao metric tensor at $\theta$, and thus the natural gradient of Amari \cite{amari98} is retrieved.

\vspace{10pt}\hspace{-15pt}\textbf{Natural gradient of Riemannian distance.} Let $(M,g)$ be a Riemannian manifold with the induced distance function $d_g$ and the metric tensor at $\rho \in M$ denoted by $G^M_\rho$. Finally, denote by $\rho_\theta$ a submanifold of $M$ parametrized by $\theta \in \Theta$. Then, when $c=\frac{1}{2}d^2$, we compute $G_\theta^{\frac{1}{2}d_g}$ as follows
\begin{equation}
\begin{aligned}
\left[H^{\frac{1}{2}d^2}_\theta\right]_{ij}=&\frac{1}{2}\left(\frac{\partial}{\partial \theta_j}\rho_\theta\right)^T \left[\nabla^2_{\rho_\eta \to \rho_\theta}d^2(\rho_\eta,\rho_{\theta})\right]\left(\frac{\partial}{\partial \theta_i}\rho_\theta\right)  \\
&+\frac{1}{2}\left[\frac{\partial^2}{\partial \theta_j \partial \theta_i}\rho_\theta\right]\left[\nabla_{\rho_\eta \to \rho_\theta}d^2(\rho_\eta,\rho_{\theta})\right],
\end{aligned}
\label{eq:distance_hessian}
\end{equation}
as $\theta' \to \theta$, the second term vanishes. Finally, $\nabla^2_{\rho_\eta \to \rho_\theta} d^2(\rho_\eta, \rho_{\theta}) = 2G^M_{\rho_\theta}$, thus
\begin{equation}
H^{\frac{1}{2}d_g}_{\theta} =J_\theta^T G^M_{x_\theta} J_\theta,
\end{equation}
where $J_\theta = \frac{\partial}{\partial \theta}\rho_\theta$ denotes the Jacobian. Therefore, the formal natural gradient corresponds to the traditional coordinate-free definition of a gradient on a Riemannian manifold, when the metric is given by the pullback

\vspace{10pt}\hspace{-15pt}\textbf{Natural gradient of Finsler distance.} Let $(M,F)$ denote a Finsler manifold,  where $F_\rho:T_\rho M \to \mathbb{R}_{\geq 0}$, for any $\rho \in M$, is a \emph{Finsler metric}, satisfying the properties of strong convexity, positive 1-homogeneity and positive definiteness. Then, a distance $d_F$ is induced on $M$ by
\begin{equation}
d_F(\rho,\rho') = \inf\limits_{\gamma} \int_0^1 F_{\gamma(t)}(\dot{\gamma}(t))dt,~\rho,\rho'\in M
\end{equation}
where $\gamma$ is any continuous, unit-parametrized curve with $\gamma(0) = \rho$ and $\gamma(1) = \rho'$. 

The \emph{fundamental tensor} $G^F$ of $F$ at $(\rho,v)$ is defined as $G^F_{\rho}(v) = \frac{1}{2}\nabla^2_{v} F^2_\rho(v)$. Then,  $G^F_\rho$ is $0$-homogeneous as the second differential of a $2$-homogeneous function. Therefore, $G^F_\rho(\lambda v) = G^F_\rho(v)$ for any $\lambda > 0$. Furthermore, $G^F_\rho(v)$ is positive-definite when $v\neq 0$. Now, let $u = -J_\theta \nabla_\theta d^2_F(\rho_\theta, \rho') $, and as we can locally write $d^2_F(\rho, \rho') = F^2_{\rho\theta}(v)$ for a suitable $v$, then
\begin{equation}
H^{\frac{1}{2}d_F^2}_\theta =\frac{1}{2}\nabla^2_{\eta \to \theta} d^2_F(\rho_\eta, \rho_{\theta}) = \frac{1}{2}\lim\limits_{\lambda \to 0} \nabla^2_{v= \lambda u} F^2_{\rho_\theta}(v) = J_\theta^T G^F_{\rho_\theta}(u) J_\theta.
\end{equation}

Coordinate-free gradient descent on Finsler manifolds has been studied by Bercu \cite{bercu00}. The formal natural gradient differs slightly from this, as we choose $v =-J_\theta \nabla_\theta d^2_F(\rho_\theta,\rho')$ in the preconditioning matrix $G^F_{(\rho_\theta, v)}$, where as in \cite{bercu00}, $v$ is chosen to maximize the descent. Thus the natural gradient descent in the Finsler case approximates the geometry in the direction of the gradient quadratically to improve the descent, but fails to take the entire local geometry into account.

\vspace{10pt}\hspace{-15pt}\textbf{$p$-Wasserstein metric.}
Let $X=\mathbb{R}^n$ and $\rho \in \mathcal{P}_p(X)$ if
\begin{equation}
\int_{X}d_2^p(x_0,x)\rho(x)dx,~\mbox{for some }x_0\in X,
\end{equation}
where $d_2$ is the Euclidean distance. Then, the $p$-Wasserstein distance $W_p$ between $\rho, \rho' \in \mathcal{P}_p(X)$ is given by
\begin{equation}
W_p(\rho, \rho') = \left(\inf\limits_{\gamma \in \mathrm{ADM}(\rho, \rho')}\int_{X\times X}d_2^p(x,x')d\gamma(x,x')\right)^\frac{1}{p},
\end{equation}
where $\mathrm{ADM}(\rho, \rho')$ is the set of joint measures with marginal densities $\rho$ and $\rho'$. The $p$-Wasserstein distance is induced by a Finsler metric~\cite{agueh12}, given by
\begin{equation}
F_\rho(v) = \left(\int_X \|\nabla \Phi_v\|_2^p d\rho\right)^\frac{1}{p},
\end{equation}
where $v\in T_\rho \mathcal{P}_p(X)$ and $\Phi_v$ satisfies $v(x) = -\nabla \cdot \left(\rho(x) \nabla_x \Phi_v(x)\right)$ for any $x\in X$, wher $\nabla \cdot$ is the divergence operator. Now, choose $v = -J_\theta\nabla_\theta W_p^2(\rho_\theta, \rho)$. Then, through a cumbersome computation, we compute how the local Hessian acts on two tangent vectors $d\theta_1, d\theta_2\in T_\theta \Theta$
\begin{equation}
\begin{aligned}
&H^{\frac{1}{2}W_p^2}_\theta(d\theta_1, d\theta_2) \\
=&(2-p)F^{2(1-p)}_{\rho_\theta}(v)\left(\int_X \|\nabla \Phi_v\|_2^{p-2}\langle \nabla \Phi_{d\theta_1},\nabla \Phi_v \rangle d\rho_\theta\right)\\
&\times \left(\int_X \|\nabla \Phi_v\|_2^{p-2}\langle \nabla \Phi_{d\theta_2}, \nabla \Phi_v \rangle d\rho_\theta\right)\\
&+ F_{\rho_\theta}^{2-p}(v)\int_X\|\nabla \Phi_v\|_2^{p-2}\langle \nabla \Phi_{d\theta_1},\nabla \Phi_{d\theta_2} \rangle d\rho_\theta\\
&+(p-2)F_{\rho_\theta}^{2-p}(v)\int_X\|\nabla \Phi_v\|_2^{p-4}\langle \nabla \Phi_{d\theta_1},\nabla \Phi_v \rangle\langle \nabla \Phi_{d\theta_2},\nabla \Phi_v \rangle d\rho_\theta,
\end{aligned}
\label{eq:finslerian_lhessian}
\end{equation}
where $J_\theta d\theta_i  = - \nabla \cdot \left(\rho_\theta \nabla \Phi_{d\theta_i}\right)$ for $i=1,2$.
The case $p=2$ is special, as the $2$-Wasserstein metric is induced by a Riemannian metric, whose pullback can be recovered by substituting $p=2$ in \eqref{eq:finslerian_lhessian}, yielding
\begin{equation}
H^{\frac{1}{2}W_2^2}_{\theta}(d\theta_1, d\theta_2) = \int_{X} \langle \nabla\Phi_{d\theta_1}, \nabla \Phi_{d\theta_2}\rangle d\rho_\theta.
\end{equation}
This yields the natural gradient of $W_2^2$ as introduced in \cite{chen18, li18}.

\vspace{10pt}\hspace{-15pt}\textbf{Acknowledgements.} The authors were supported by Centre for Stochastic Geometry and Advanced Bioimaging, and a block stipendium, both funded by a grant from the Villum Foundation.

%
%

\bibliographystyle{splncs04}
\bibliography{gsi_reference.bib}
\end{document}